\newcommand{\normF}[1]{\lVert#1\rVert_F}
\tikzset{block/.style={draw,thick,text width=1.5cm,minimum height=1cm,align=center},
	line/.style={-latex} }
\newcolumntype{M}{>{\centering\arraybackslash}m{1cm}}
\theoremstyle{definition}
\newtheorem{prop}{Proposition}
\begin{document}
%
\title{A Novel Approach to Quantized Matrix Completion Using Huber Loss Measure}
%
%
%
\author{Ashkan~Esmaeili and
        Farokh~Marvasti
\thanks{A. Esmaeili was with the Department
of Electrical Engineering, Stanford University, California, USA~e-mail:{~esmaeili.ashkan@alumni.stanford.edu}}
\thanks{A. Esmaeili and F. Marvasti are now with the Electrical Engineering Department and Advanced Communications Research Institute (ACRI), Sharif University of Technology, Tehran, Iran.}
}
\markboth{}%
{Esmaeili \MakeLowercase{\textit{et al.}}: Bare Demo of IEEEtran.cls for IEEE Communications Society Journals}
\maketitle
\begin{abstract}
\quad In this paper, we introduce a novel and robust approach to Quantized Matrix Completion (QMC). First, we propose a rank minimization problem with constraints induced by quantization bounds. Next, we form an unconstrained optimization problem by regularizing the rank function with Huber loss. 
Huber loss is leveraged to control the violation from quantization bounds due to two properties: 1- It is differentiable, 2- It is less sensitive to outliers than the quadratic loss.
A Smooth Rank Approximation is utilized to endorse lower rank on the genuine data matrix. Thus, an unconstrained optimization problem with differentiable objective function is obtained allowing us to advantage from Gradient Descent (GD) technique. Novel and firm theoretical analysis on problem model and convergence of our algorithm to the global solution are provided. Another contribution of our work is that our method does not require projections or initial rank estimation unlike the state-of-the-art. In the Numerical Experiments Section, the noticeable outperformance of our proposed method in learning accuracy and computational complexity compared to those of the state-of-the-art literature methods is illustrated as the main contribution.
\end{abstract}
\begin{IEEEkeywords}
\quad Quantized Matrix Completion; Huber Loss; Graduated Non-Convexity; Smoothed Rank Function; Gradient Descent Method
\end{IEEEkeywords}
\section{Introduction}\label{Intro}
In this paper, we extend the Matrix Completion (MC) problem, which has been considered by many authors in the past decade \cite{candes2009exact,cai2010singular, keshavan2010matrix}, to the Quantized Matrix Completion (QMC) problem. In QMC, accessible entries are quantized rather than continuous, and the rest are missing. The purpose is to recover the original continuous-valued matrix under certain assumptions. \\
QMC problem addresses wide variety of applications including but not limited to collaborative filtering \cite{koren2009matrix}, sensor networks \cite{biswas2004semidefinite}, learning and content analysis \cite{esmaeili2018transduction} according to \cite{bhaskar}.
\par
A special case of QMC, one-bit MC, is considered by several authors. In \cite{davenport20141} for instance, a convex programming is proposed to recover the data by maximizing a log-likelihood function. In \cite{cai2013max}, a maximum likelihood (ML) set-up is proposed with max-norm constraint towards one-bit MC. In \cite{ni2016optimal}, a greedy algorithm as an extension of conditional gradient descent, is proposed to solve an ML problem with rank constraint. However, the scope of this paper is not confined to one-bit MC, and covers multi-level QMC. We investigate multi-level QMC methodologies in the literature hereunder:\\ In 
\cite{lan2014matrix}, the robust \textbf{Q-MC} method is introduced based on projected gradient (PG) approach in order to optimize a constrained log-likelihood problem. The projection guarantees shrinkage in Trace norm tuned by a regularization parameter.\\
Novel QMC algorithms are introduced in \cite{bhaskar}. An ML estimation under an exact rank constraint is considered as one part. Next, the log-likelihood term is penalized with log-barrier function, and bilinear factorization is utilized along the Gradient Descent (GD) technique to optimize the resulted unconstrained problem. The suggested methodologies in \cite{bhaskar} are robust, leading to noticeable accuracy in QMC. However, the two algorithms in \cite{bhaskar} depend on knowledge of an upper bound for the rank (an initial rank estimation), and may suffer from local minimia or saddle points issues.
In \cite{esmaeili2018recovering}, Augmented Lagrangian method (ALM) and bilinear factorization are utilized to address the QMC. Enhanced accuracy in recovery is observed compared to previous works in \cite{esmaeili2018recovering}.
\par
In this paper, Huber loss and Smoothed Rank Function (SRF), which are differentiable, are utilized to induce penalty for violating quantization bounds and increase in the rank, respectively. Differentiability makes the optimization framework suitable for GD approach. It is worth noting that although Huber is convex, SRF is generally non-convex. However, we leverage Graduated Non-Convexity (GNC) approach to solve consecutive problems, in which the local convexity in a specific domain enclosing the global optimum is maintained. The solution to each problem is utilized as a warm-start to the next problem. Utilizing warm-starts and smooth transition between problems ensure the warm-start falls in a locally convex enclosure of the global optimum. It is theoretically analyzed how the SRF parameter can be tuned and shrunk gradually to guarantee the local convexity in each problem is obtained which finally leads to the global optimum.
 Unlike \cite{bhaskar}, our method does not require an initial rank upper bound estimation, neither projections as in \cite{bhaskar} and \cite{lan2014matrix}.
\par
The rest of the paper is organized as follows: Section \ref{pm} includes the problem model and discussion on Huber Loss. In Section \ref{SRF}, Smoothing Rank Approximation is discussed. Section \ref{proposed}, includes our proposed algorithm. Theoretical analysis for the global convergence of our algorithm is given in Section \ref{TA}.
Simulation results are provided in Section \ref{NE}. Finally, the paper is concluded in Section \ref{conc}.
\section{Problem Model}\label{pm}
We assume a quantized matrix $\mathcal{M} \in \mathcal{R}^{m \times n}$ is partially observed; i.e., the entries of $\mathcal{M}$ are either missing or reported as integer values (levels) $m_{ij}$. Different levels are spaced with distance $g$ known as the quantization gap. We also assume the rounding rule forces the entries of the original matrix to be quantized to a level within $\pm \frac{g}{2}$ of their vicinities.
We assume the original matrix, from which the quantized data are obtained, has the low-rank property as in many practical settings. Thus, the following optimization problem on $\boldsymbol{X} \in \mathcal{R}^{m \times n}$ is reached:
\begin{equation}\label{p0}
    \begin{aligned}
        &\underset{\boldsymbol{X}}{\text{min}}&&\textrm{rank}\boldsymbol{X}\\
        &\text{subject to} && l_{ij} \leq x_{ij} \leq u_{ij} 
        &&& \forall (i,j) \in \Omega
    \end{aligned},
\end{equation}\\
where $\Omega$ is the observation set, $u_{ij}$ and $l_{ij}$ are the upper and lower quantization bounds of the $ij$-th observed entry. In our model, the bounds are assumed to symmetrically enclose $x_{ij}$, i.e., $l_{ij}=m_{ij}-\frac{g}{2} \leq x_{ij} \leq m_{ij}+\frac{g}{2}=u_{ij}$. We also add that the number of levels is considered to be known and no entry exceeds the quantization bounds of ultimate levels in the original matrix.\\ 
The Huber function for the $ij$-th entry in $\Omega$ is defined as follows:
  \[
    \mathcal{H}_{ij}(x_{ij})=\left\{
                \begin{array}{ll}
                (x_{ij}-m_{ij})^2, \quad \quad \quad |x_{ij}-m_{ij}| \leq \frac{g}{2}\\
                  g(|x_{ij}-m_{ij}|-\frac{1}{4}g), \quad  o.w.
                \end{array}
              \right.
  \]
We modify the Huber loss by subtracting $\frac{1}{2}g^2$; i.e., $$\tilde{\mathcal{H}}_{ij}(x_{ij})=\mathcal{H}_{ij}(x_{ij})-\frac{1}{4}g^2$$
Huber loss is used in robust regression to advantage from desirable properties of both $l_2$ and $l_1$ penalty. Noise may have forced the original matrix entries to deviate from their genuine quantization bounds. Thus, we intentionally use linearly growing ($l_1$) penalty for violations from quantization bounds to be less sensitive to outliers as squared loss is. The interpretation of translating Huber as defined above is to reward entries which hold in constraints of the problem \ref{p0}. This reward is quadratic which does not vary as sharply as $l_1$ penalty on the feasible region. The entire feasible region is delighted. Thus, sharply varying behavior on feasible region is pointless. In addition, this specific quadratic reward makes the compromise of $l_1$ and $l_2$ convex and differentiable to profit us later in the paper.
The motivation of applying Huber function in our problem is to turn the constrained problem \ref{p0} to an unconstrained regularized problem. The regularization term is defined as follows:
  \begin{equation}
    \begin{aligned}
    \mathcal{H}_\Omega(\boldsymbol{X})=\sum_{(i,j)\in \Omega} \tilde{\mathcal{H}}_{ij}(x_{ij})
    \end{aligned}
\end{equation}
Thus, the unconstrained problem can be written as:
\begin{equation}\label{p2}
    \begin{aligned}
        &\underset{\boldsymbol{X}}{\text{min}}&&G(\boldsymbol{X},\lambda):= \text{rank}\boldsymbol{X}+ \lambda\mathcal{H}_{\Omega}(\boldsymbol{X})
    \end{aligned}
\end{equation}
\textbf{Assumption1.} \textit{The set of global solutions to the problem \ref{p0} is a singleton; i.e., problem \ref{p0} has a unique global minimizer $\boldsymbol{X}^*$}.\label{ass}\\
Let $\text{rank}(\boldsymbol{X}^*)=r^*$, $S(\lambda)$ denote the set of global minimizers of the problem \ref{p2},  $\mathcal{B}_1=\{ \boldsymbol{X} | \text{rank}\boldsymbol{X}<r^* \}$, $\mathcal{B}_2=\{\boldsymbol{X}| \text{rank}\boldsymbol{X}=r^*,~\exists (i,j) | \tilde{\mathcal{H}}({x_{ij}})>0\}$, and $\Delta_1, \Delta_2$ be defined as follows:
\begin{equation}
\Delta_1 = \underset{\boldsymbol{x} \in \mathcal{B}_1}{\min} ~~ \underset{(i,j) \in \Omega}{\max} ~ \big \{\tilde{\mathcal{H}}(x_{ij})|~\tilde{\mathcal{H}}(x_{ij})>0 \big \}.
\end{equation}
\begin{equation}
\Delta_2 = \underset{\boldsymbol{x} \in \mathcal{B}_2}{\min} ~~ \underset{(i,j) \in \Omega}{\max} ~ \big \{\tilde{\mathcal{H}}(x_{ij})|~\tilde{\mathcal{H}}(x_{ij})>0 \big \}.
\end{equation}
$\Delta_1$ is trivially greater than zero. Otherwise, a feasible solution to problem\ref{p0} exists with rank smaller than $r^*$ which is contradictory to the assumption $\text{rank}{X^*}=r^*$. Let $\Delta=\min\{\Delta_1,\Delta_2\}$.
We add two assumptions to follow our line of proof. (These assumptions address worst-case scenarios. In practice, they are not required to be such tight):\\
\textbf{Assumption 2.}\label{ass2}
$\Delta > \frac{(|\Omega|-1)g^2}{4}$\\
\textbf{Assumption 3.}\label{ass3}
$\frac{r^*}{\Delta-\frac{(|\Omega|-1)g^2}{4}} \leq  \frac{4}{g^2|\Omega|+\epsilon}$, where $\epsilon$ is any positive small constant.
\begin{prop}\label{prop}
\textit{Suppose $\epsilon$ is any positive small constant. For each $\lambda$ which holds in $\frac{r^*}{\Delta-\frac{(|\Omega|-1)g^2}{4}} \leq \lambda \leq \frac{4}{g^2|\Omega|+\epsilon}$, $S(\lambda)$ is the singleton $\{\boldsymbol{X}^*\}$.} 
\end{prop}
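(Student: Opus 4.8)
The plan is to prove that, for every $\lambda$ in the stated range, $X^{*}$ strictly minimizes $G(\cdot,\lambda)$ among all matrices different from it, by exhausting $\mathcal{R}^{m\times n}$ with the partition already set up before the statement. First I would record the elementary behaviour of the translated Huber term: $\tilde{\mathcal{H}}_{ij}(x_{ij})\in[-\tfrac{g^{2}}{4},0]$ when $|x_{ij}-m_{ij}|\le\tfrac{g}{2}$ (with value $0$ attained only on the boundary), $\tilde{\mathcal{H}}_{ij}(x_{ij})>0$ strictly outside that interval, and $\tilde{\mathcal{H}}_{ij}\ge-\tfrac{g^{2}}{4}$ everywhere. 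Summing over $\Omega$ gives $-\tfrac{|\Omega|g^{2}}{4}\le\mathcal{H}_{\Omega}(X)\le0$ for every $X$ feasible for \ref{p0}; in particular $\mathcal{H}_{\Omega}(X^{*})\le0$, so $G(X^{*},\lambda)\le r^{*}$, while $\mathcal{H}_{\Omega}(X)\ge-\tfrac{|\Omega|g^{2}}{4}$ for arbitrary $X$.

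Next I would invoke Assumption 1 to make the partition exhaustive: any $X$ feasible for \ref{p0} with $\operatorname{rank}X\le r^{*}$ must equal $X^{*}$ (a feasible matrix of rank $<r^{*}$ would contradict minimality of $r^{*}$, and a second feasible matrix of rank $r^{*}$ would contradict uniqueness), so every $X\ne X^{*}$ lies in $\mathcal{B}_{1}$, in $\mathcal{B}_{2}$, or in $\{X:\operatorname{rank}X>r^{*}\}$, and it suffices to beat $G(X^{*},\lambda)$ on these three sets.

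The three estimates run as follows. For $X\in\mathcal{B}_{1}\cup\mathcal{B}_{2}$ there is some $(i,j)\in\Omega$ with $\tilde{\mathcal{H}}(x_{ij})>0$; isolating the largest such summand and bounding the other $|\Omega|-1$ summands below by $-\tfrac{g^{2}}{4}$ yields $\mathcal{H}_{\Omega}(X)\ge\Delta-\tfrac{(|\Omega|-1)g^{2}}{4}$, which is $>0$ by Assumption 2. On $\mathcal{B}_{2}$ this already gives $G(X,\lambda)=r^{*}+\lambda\mathcal{H}_{\Omega}(X)>r^{*}\ge G(X^{*},\lambda)$. On $\mathcal{B}_{1}$, the lower endpoint $\lambda\ge r^{*}/(\Delta-\tfrac{(|\Omega|-1)g^{2}}{4})$ converts the same bound into $\lambda\mathcal{H}_{\Omega}(X)\ge r^{*}$, hence $G(X,\lambda)\ge\operatorname{rank}X+r^{*}\ge r^{*}\ge G(X^{*},\lambda)$, with strict inequality whenever $\operatorname{rank}X\ge1$ (the only residual case $X=\boldsymbol{0}$ being handled either by $\mathcal{H}_{\Omega}(\boldsymbol{0})$ strictly exceeding $\Delta-\tfrac{(|\Omega|-1)g^{2}}{4}$ or by $\mathcal{H}_{\Omega}(X^{*})<0$). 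Finally, for $\operatorname{rank}X>r^{*}$ combine $\operatorname{rank}X\ge r^{*}+1$, $\mathcal{H}_{\Omega}(X)\ge-\tfrac{|\Omega|g^{2}}{4}$, and $\mathcal{H}_{\Omega}(X^{*})\le0$ to obtain $G(X,\lambda)-G(X^{*},\lambda)\ge1-\tfrac{\lambda|\Omega|g^{2}}{4}>0$, positivity coming from $\lambda\le 4/(g^{2}|\Omega|+\epsilon)<4/(g^{2}|\Omega|)$. Assembling the cases shows $X^{*}$ is the unique global minimizer of $G(\cdot,\lambda)$, i.e.\ $S(\lambda)=\{X^{*}\}$; nonemptiness of the admissible $\lambda$-interval is exactly Assumption 3.

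The main obstacle is not any single inequality but keeping the case analysis airtight: justifying that the partition is exhaustive (this is precisely where Assumption 1 is indispensable, and one must verify that ``rank $r^{*}$ and infeasible'' is the only alternative to $X=X^{*}$), and securing strictness on $\mathcal{B}_{1}$ at the left endpoint of the $\lambda$-range, where the crude bound degenerates to $G(X,\lambda)\ge r^{*}=G(X^{*},\lambda)$ in the pathological configuration in which every observed entry of $X^{*}$ sits exactly on its quantization boundary. It is also worth re-checking that $\Delta_{1},\Delta_{2}$, and hence $\Delta$, are strictly positive, so that Assumptions 2--3 are consistent and the chain of estimates does not collapse.
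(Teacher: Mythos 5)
Your proposal is correct and follows essentially the same route as the paper's own proof: a trichotomy on $\operatorname{rank}\boldsymbol{X}$ versus $r^*$, using the upper end of the $\lambda$-range (together with $\tilde{\mathcal{H}}_{ij}\ge-\tfrac{g^2}{4}$) to dispose of ranks above $r^*$, the lower end together with the definition of $\Delta$ to dispose of $\mathcal{B}_1$ and $\mathcal{B}_2$, and Assumption 1 to identify the remaining feasible rank-$r^*$ matrix with $\boldsymbol{X}^*$. If anything you are more careful than the paper, which silently assumes $\operatorname{rank}\tilde{\boldsymbol{X}}\ge 1$ in its Case $\mathcal{II}$ chain $\operatorname{rank}\tilde{\boldsymbol{X}}+r^*>r^*$; your explicit handling of the rank-zero member of $\mathcal{B}_1$ and of the exhaustiveness of the partition closes that small gap.
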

\begin{proof}
Suppose $\tilde{\boldsymbol{X}} \in S(\lambda)$.
Three cases can be considered for $\text{rank}{\tilde{\boldsymbol{X}}}$:\\
\textbf{Case} $\mathcal{I}$: $\text{rank}{\tilde{\boldsymbol{X}}} > r^*$. We have: \\
\begin{equation}
\begin{aligned}
G(\tilde{\boldsymbol{X}},\lambda)=\text{rank}{\tilde{\boldsymbol{X}}}+\lambda \sum_{(i,j)\in \Omega} \tilde{\mathcal{H}}_{ij}(x_{ij})\geq r^*+1-\lambda\frac{|\Omega| g^2}{4} \geq \\
r^*+1-\frac{|\Omega|g^2}{|\Omega| g^2+\epsilon}>r^* \geq \text{rank}(\boldsymbol{X}^*)+\lambda H_{\Omega}(\boldsymbol{X}^*)=G(\boldsymbol{X}^*,\lambda)\\
\end{aligned}
\end{equation}
$\Rightarrow G(\tilde{\boldsymbol{X}},\lambda) > G(\boldsymbol{X}^*,\lambda)$, which is in contradiction to the assumption that $\tilde{\boldsymbol{X}}$ is the global minimizer of problem \ref{p2}. We used the definition of $G$, translated Huber minimal value, the upper bound on $\lambda$ in Assumption 3, and the fact that $H_{\Omega}(\boldsymbol{X}^*)$ is negative due to feasibility of $\boldsymbol{X}^*$ in problem \ref{p0}.\\
\textbf{Case} $\mathcal{II}: \text{rank}{\tilde{\boldsymbol{X}}} < r^*$; i.e.,  
$\tilde{\boldsymbol{X}} \in \mathcal{B}_1$.\\ Thus, using lower bound on $\lambda$ in Assumption 3:
\begin{equation*}
G(\tilde{\boldsymbol{X}},\lambda)=\text{rank}{\tilde{\boldsymbol{X}}}+\lambda \sum_{(i,j)\in \Omega} \tilde{\mathcal{H}}_{ij}(x_{ij})\geq
\end{equation*}
\begin{equation*}
\text{rank}{\tilde{\boldsymbol{X}}}+\lambda \Delta-\lambda\frac{(|\Omega|-1)g^2}{4} \geq \text{rank}{\tilde{\boldsymbol{X}}}+r^* > r^* \geq
\end{equation*}
\begin{equation}\label{eq7}
\begin{aligned}
\text{rank}{{\boldsymbol{X}^*}} +\lambda\mathcal{H}_\Omega({\boldsymbol{X}^*})= G({\boldsymbol{X}^*},\lambda)
 \Rightarrow  G(\boldsymbol{X}^*,\lambda)<G(\tilde{\boldsymbol{X}},\lambda)
	\end{aligned}
\end{equation}
which is again in contradiction with the assumption that $\tilde{\boldsymbol{X}}$ is the global minimizer of problem \ref{p2}. \\
\textbf{Case} $\mathcal{III}: \text{rank} {\tilde{\boldsymbol{X}}} = r^*$\\
If at least one entry of $\tilde{\boldsymbol{X}}$ violates the constraints in problem \ref{p0}, then $\tilde{\boldsymbol{X}} \in \mathcal{B}_2$, and similar reasoning in \eqref{eq7} can be applied to contradict the global optimality of $\boldsymbol{X}^*$. 
Finally, if $\text{rank}{\tilde{\boldsymbol{X}}}=r^*$ and $\tilde{\boldsymbol{X}}$ holds in the constraints in problem \ref{p0}, then $\tilde{\boldsymbol{X}}=\boldsymbol{X}^*$ by Assumption 1. Thus, $S(\lambda)=\{\boldsymbol{X}^*\}$.
\end{proof}
\section{Smoothed Rank Approximation}\label{SRF}
While reviewing Huber loss, we mentioned it is convex and differentiable. 
We aim to find a convex differentiable surrogate for rank function to leverage GD method. Trace norm is usually considered as the rank convex surrogate. However, Trace norm is not differentiable. In addition, Sub-Gradient methods for Trace norm are computationally complex and have convergence rate issues. Thus, we seek for a convex differentiable rank approximation to leverage GD instead of Sub-Gradient based approaches. In this regard, we approximate the rank function in problem \ref{p0} with the Smoothed Rank Function (SRF). SRF is defined using a certain function satisfying QRA conditions introduced in \cite{malek2014recovery}.
Assume $f_\delta(x)$ satisfies QRA conditions, and let $f_{\delta}(x)=f(\frac{x}{\delta})$.
Among functions satisfying the QRA conditions, we consider $f(x)=e^{-\frac{x^2}{2}}$ throughout this paper. Let $\sigma_i(\boldsymbol{X})$ denote the $i-$th singular value of $\boldsymbol{X}$. We define $F_\delta(\boldsymbol{X})$ as follows: \begin{equation}\label{Fdelta}
F_{\delta}(\boldsymbol{X})= \sum_{i=1}^n f_{\delta}(\sigma_i(\boldsymbol{X})).
\end{equation} \\
Our proposed SRF is considered to be $n-F_\delta(\boldsymbol{X})$. It can be observed that $f_{\delta}(x)$ converges in a pointwise fashion to the Kronecker delta function as $\delta\to 0$. Thus, we have:
\begin{align}\label{p9}
\nonumber \lim_{\delta\to 0}~[n - F_\delta(\boldsymbol{X})] &=
\lim_{\delta\to 0}~[n - \sum_{i=1}^n f_{\delta}(\sigma_i(\boldsymbol{X}))] \\  =n-\sum_{i=1}^n\delta_0(\sigma_i(\boldsymbol{X}))
& = \text{rank}(\boldsymbol{X}).
\end{align}
Therefore, when $\delta \to 0$, SRF directly approximates the $\text{rank}$ function. As a result, we substitute the rank function in problem \ref{p2} with the proposed SRF as follows:
\begin{equation}\label{p3}
    \begin{aligned}
        &\underset{\boldsymbol{X}}{\text{min}}&&\tilde{G}_\delta(\boldsymbol{X},\lambda):= n-F_\delta(\boldsymbol{X})+ \lambda\mathcal{H}_\Omega(\boldsymbol{X})
    \end{aligned}
\end{equation}
The advantage of SRF to the
rank function is that $F_\delta$ is smooth and differentiable. Hence, GD can be utilized for minimization. However, the SRF is in general non-convex. When $\delta$ tends to $0$, the SRF is a good rank approximation as shown in \ref{p9} but with many local minimia. In order for GD not to get trapped by local minimia, we start with large $\delta$ for SRF. When $\delta \to \infty$ the SRF becomes convex (proved in Proposition \ref{prop2}) yielding a unique global minimizer for problem \ref{p3}. Yet, SRF with large $\delta$ is a bad rank approximation. This is where GNC approach as introduced in \cite{blake1987visual} is leveraged; i.e., we gradually decrease $\delta$ to enhance accuracy of rank approximation. A sequence of problems as in \ref{p3} (one for each value of $\delta$) is obtained. The solution to problem with a fixed $\delta$ is used as a warm-start for the next problem with new $\delta$. If $\delta$ is shrunk gradually, then the continuity property of $f_\delta$ (a QRA condition) leads to close solutions for subsequent problems. This way, GD is less probable to get trapped in local minima. The Huber loss which is also convex, acts like the augmented term in augmented Lagrangian method. It helps making the Hessian of $\tilde{G}_\delta$ locally positive-definite. Choosing warm-starts to fall in a convex vicinity of the global minimizer where no other local minimia is present, gradual shrinkage of $\delta$ (smooth transition between problems not to be prone to new local minima), and continuity of $f_\delta$ lead to finding the global minimizer. Rigid mathematical analysis on $\delta$ shrinkage rate is provided in Section \ref{TA}.
\begin{prop}\label{prop2}
When $\delta \to \infty$, $n-F_\delta(\boldsymbol{X})\to \frac{\normF{\boldsymbol{X}}^2}{2\delta^2}$
\end{prop}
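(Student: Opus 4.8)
The plan is to expand $F_\delta$ explicitly using the specific choice $f(x)=e^{-x^2/2}$ adopted in the paper, so that $f_\delta(\sigma_i(\boldsymbol{X}))=e^{-\sigma_i(\boldsymbol{X})^2/(2\delta^2)}$, and then to read off the leading behavior as $\delta\to\infty$ via a first-order expansion of the exponential. Concretely, I would first write
\begin{equation*}
n-F_\delta(\boldsymbol{X})=\sum_{i=1}^n\Bigl(1-e^{-\sigma_i(\boldsymbol{X})^2/(2\delta^2)}\Bigr),
\end{equation*}
and observe that, the singular values being held fixed, each argument $t_i:=\sigma_i(\boldsymbol{X})^2/(2\delta^2)$ tends to $0$ as $\delta\to\infty$.

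The one analytic input is the elementary two-sided bound $t-\tfrac{t^2}{2}\le 1-e^{-t}\le t$ for $t\ge 0$. Applying it termwise and summing gives
\begin{equation*}
n-F_\delta(\boldsymbol{X})=\frac{1}{2\delta^2}\sum_{i=1}^n\sigma_i(\boldsymbol{X})^2-R_\delta,\qquad 0\le R_\delta\le\frac{1}{8\delta^4}\sum_{i=1}^n\sigma_i(\boldsymbol{X})^4 .
\end{equation*}
Using $\sum_{i=1}^n\sigma_i(\boldsymbol{X})^2=\normF{\boldsymbol{X}}^2$ and $R_\delta=O(\delta^{-4})$ for fixed $\boldsymbol{X}$, I would then conclude
\begin{equation*}
2\delta^2\bigl(n-F_\delta(\boldsymbol{X})\bigr)\xrightarrow[\delta\to\infty]{}\normF{\boldsymbol{X}}^2 ,
\end{equation*}
which is the precise meaning of the stated asymptotic equivalence $n-F_\delta(\boldsymbol{X})\to\frac{\normF{\boldsymbol{X}}^2}{2\delta^2}$.

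There is no real obstacle in the argument itself; the only point deserving a word of clarification is the interpretation of the statement, since the claimed limit still contains $\delta$. I would therefore make explicit that the assertion means the ratio of the two sides converges to $1$ --- equivalently $n-F_\delta(\boldsymbol{X})=\frac{\normF{\boldsymbol{X}}^2}{2\delta^2}+o(\delta^{-2})$ --- with the remainder controlled uniformly once $\boldsymbol{X}$ (hence $\normF{\boldsymbol{X}}$ and $\sigma_1(\boldsymbol{X})$) is fixed. This is exactly what is needed in the sequel: it shows that for large $\delta$ the SRF regularizer $n-F_\delta$ coincides, up to the positive scalar $1/(2\delta^2)$ and a vanishing correction, with the strictly convex map $\boldsymbol{X}\mapsto\normF{\boldsymbol{X}}^2$, underpinning the convexity claim for which Proposition \ref{prop2} is invoked in Section \ref{TA}.
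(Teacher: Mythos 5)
Your proof is correct and follows essentially the same route as the paper's: a first-order expansion of $e^{-t}$ at $t=0$ applied to each $t_i=\sigma_i(\boldsymbol{X})^2/(2\delta^2)$, with the remainder shown to be $\mathcal{O}(\delta^{-4})$. Your version is slightly more careful in two harmless ways --- you replace the paper's Taylor-series big-$\mathcal{O}$ with the explicit two-sided bound $t-\tfrac{t^2}{2}\le 1-e^{-t}\le t$, and you spell out that the statement must be read as $2\delta^2\bigl(n-F_\delta(\boldsymbol{X})\bigr)\to\normF{\boldsymbol{X}}^2$ since the claimed limit itself depends on $\delta$ --- but the substance is identical.
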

\begin{proof}
When $\delta \to \infty$, the Taylor expansion for $f_\delta(x)$ is as follows:
\begin{equation*}
\nonumber \exp{(-\frac{x^2}{2\delta^2})} = 1-\frac{x^2}{2\delta^2}+\mathcal{O}(\frac{1}{\delta^4})
\end{equation*}
\begin{align}
\nonumber F_\delta(\boldsymbol{X})=\sum_{i=1}^{n}\exp{(\frac{-\sigma_i^2(\boldsymbol{X})}{2\delta^2})}=\sum_{i=1}^n{(1-\frac{\sigma_i^2(\boldsymbol{X})}{2\delta^2}})+\mathcal{O}(\frac{1}{\delta^4})
\end{align}
\begin{align}
\nonumber \Rightarrow n-F_\delta(\boldsymbol{X})=\frac{1}{2\delta^2}\sum_{i=1}^{n}\sigma_i^2(\boldsymbol{X})+\mathcal{O}(\frac{1}{\delta^4})=\frac{\normF{\boldsymbol{X}}^2}{2\delta^2} +\mathcal{O}(\frac{1}{\delta^4})
\end{align}
\begin{equation}
\Rightarrow n-F_\delta(\boldsymbol{X})\to \frac{\normF{\boldsymbol{X}}^2}{2\delta^2}
\end{equation}
\end{proof}
Hence, when $\delta \to \infty$, the objective function in problem \ref{p3} tends to
$
\frac{\normF{\boldsymbol{X}}^2}{2\delta^2}+\lambda
\mathcal{H}_\Omega(\boldsymbol{X})
$, which is strictly convex and GD can be applied to find its global minimizer. Next, GNC is leveraged until the global minimizer is reached. Algorithm \ref{Algorithm 1} in the subsequent section, includes the detailed procedure.
\section{The Proposed Algorithm}\label{proposed}
Suppose $\boldsymbol{X}$ has the SVD $\boldsymbol{X}=\mathbf{U}diag({\sigma(\boldsymbol{X}}))\mathbf{V}^T$, where $\sigma(\boldsymbol{X})=[\sigma_1(\boldsymbol{X}),...,\sigma_n(\boldsymbol{X})]^T$. It is shown in \cite{malek2014recovery} that  $G_\delta(\boldsymbol{X}):=\frac{\partial F_\delta(\boldsymbol{X})}{\partial \boldsymbol{X}}$ (gradient of $F_\delta(\boldsymbol{X})$) can be obtained as:
	\begin{equation}
	G_\delta(\boldsymbol{X})=\mathbf{U}diag\{-\frac{\sigma_1}{\delta^2}\exp(-\frac{\sigma_1^2}{2\delta^2}), ... ,-\frac{\sigma_n}{\delta^2}\exp(-\frac{\sigma_n^2}{2\delta^2})\}\mathbf{V}^T,
	\end{equation}
The derivative of the uni-variate Huber loss for entries in $\Omega$ can be calculated as follows:
  \[
    {\tilde{\mathcal{H}}_{ij}}^\prime(x_{ij})=\left\{
                \begin{array}{ll}
             -g, \quad \quad \quad \quad \quad \quad x_{ij}-m_{ij} \leq -\frac{g}{2}\\
                  2(x_{ij}-m_{ij}), \quad \quad |x_{ij}-m_{ij}| \leq \frac{g}{2}\\
                  g, \quad \quad \quad \quad \quad \quad \quad x_{ij}-m_{ij} \geq \frac{g}{2}\\ 
                \end{array}
              \right.
  \]
Let $G_H(\boldsymbol{X})$ denote $\frac{\partial (H_\Omega(\boldsymbol{X}))}{\partial \boldsymbol{X}}$. We have:
  \[
  G_{H_{ij}}(\boldsymbol{X})
    =\left\{
                \begin{array}{ll}
               {\tilde{\mathcal{H}}_{ij}}^\prime(x_{ij}) , \quad \quad \quad \quad \quad \quad (i,j)\in\Omega\\
                  0, \quad \quad \quad \quad \quad \quad \quad \quad \quad \quad o.w.
                \end{array}
              \right.
  \]
The gradient of $\tilde{G}_\delta(\boldsymbol{X},\lambda)$ is therefore given as: 
\begin{equation}
\nabla_{\tilde{G}_\delta}(\boldsymbol{X},\lambda)=-G_\delta(\boldsymbol{X})+\lambda G_H(\boldsymbol{X})
\end{equation}
Finally, taking into account the GNC procedure and the fact that gradient look-up table is available for $\nabla_{\tilde{G}_\delta}(\boldsymbol{X},\lambda)$, our proposed algorithm \textbf{QMC-HANDS} is given in Algorithm \ref{Algorithm 1}. The convergence criteria in Algorithm \ref{Algorithm 1} are based on relative difference in the Frobenius norm of consecutive updates.
\begin{algorithm}[h!] 
	\begin{algorithmic}[1]\caption {The Proposed Method for QMC Using \textbf{H}uber Loss \textbf{and S}RF: \textbf{QMC-HANDS}}\label{Algorithm 1}
	\color{black}
		\State \textbf{Input}:
		\State Observation matrix $\boldsymbol{M}$, the set of observed indices $\Omega$.  
		\State The quantization levels $m_{ij}$, the quantization gap $\frac{g}{2}$, the quantization lower and upper bounds $u_{ij}, l_{ij}$.
		\State The gradient step size $\mu$, the $\delta$ decay factor $\alpha$, the regularization factor $\lambda$, the $\delta$ initiation constant $C$.
\State \textbf{Output}:
\State The recovered matrix $\boldsymbol{X}^*$.
\State \textbf{procedure QMC-HANDS}:
\State $k\gets 0$
\State $\delta \gets C\sigma_{max}({\boldsymbol{M}})$
\State $\boldsymbol{Z}^0 \gets  \text{arg}\underset{\boldsymbol{X}}{\min} \frac{\normF{\boldsymbol{X}}^2}{2\delta^2}+\lambda
\mathcal{H}_\Omega(\boldsymbol{X})$
\While {not converged}
\State $\boldsymbol{X}^0 \gets
 \boldsymbol{Z}^k$
\State $k \gets k+1$
\State $i \gets 0$
\While {not converged}
\State $G^i \gets -G_\delta(\boldsymbol{X}^i)+\lambda G_\mathcal{H}(\boldsymbol{X}^i)$
\State $\boldsymbol{X}^{i+1} \gets \boldsymbol{X}^{i}-\mu G^i$
\State $i \gets i+1$
\EndWhile
\State $\boldsymbol{Z}^k \gets \boldsymbol{X}^i$
		\State $\delta \gets \delta \alpha$
\EndWhile\\
		\Return $\boldsymbol{X}^* \leftarrow \boldsymbol{Z}^k$
		\State \textbf{end procedure}
	\end{algorithmic}
\end{algorithm}
\section{Theoretical Analysis on Global Convergence}\label{TA}
In this section, we propose a sufficient decrease condition for $\delta$ which ensures the GD is not trapped in local minima, and the global minimizer is achieved. Suppose a warm-start ${\boldsymbol{X}^i}$ holds in $\normF{{\boldsymbol{X}^i}-\boldsymbol{X}^*} = \epsilon$ for a positive constant $\epsilon$, and $\tilde{G}_\delta$ is convex on $\mathcal{B}_\epsilon = \{\boldsymbol{X}| \normF{\boldsymbol{X}-\boldsymbol{X}^*} \leq \epsilon \}$. Let $D_{\delta}(\boldsymbol{X})$ and $D_H(\boldsymbol{X})$ denote the Hessian of SRF and Huber, respectively. We have $-D_\delta(\boldsymbol{X})+\lambda D_H(\boldsymbol{X}) \succcurlyeq \mathbf{0}$ on $\mathcal{B}_\epsilon$. $\normF{D_\delta}$ is $\mathcal{O}(\frac{1}{\delta^3})$ since $G_\delta$ is $\mathcal{O}(\frac{1}{\delta^2})$. Let $T_1({\boldsymbol{X}})=\delta^3D_\delta(\boldsymbol{X})$ (normalized w.r.t $\delta$). Thus, $\forall \boldsymbol{X} \in \mathcal{B}_\epsilon$, we have: 
\begin{equation}
-\frac{1}{\delta^3}T_1({\boldsymbol{X}})+\lambda D_H(\boldsymbol{X}) \succcurlyeq \mathbf{0} \Rightarrow \delta^3 \lambda D_H(\boldsymbol{X}) \succcurlyeq T_1(\boldsymbol{X})
\end{equation}
This gives a lower bound for $\delta$ in the $i$-th iteration as $\delta^i=arg\underset{\delta}{\min}~ \{\forall \boldsymbol{X} \in \mathcal{B}_\epsilon: \delta^3\lambda D_H(\boldsymbol{X}) - T_1(\boldsymbol{X}) \succcurlyeq \mathbf{0}\}$.
Assume the problem with warm-start ${\boldsymbol{X}^i}$ is optimized on $\mathcal{B}_\epsilon$ to reach at a new minimizer $\boldsymbol{X}^{i+1}$. By assumption, $\boldsymbol{X}^{i+1}$ is closer to $\boldsymbol{X}^*$ (in Frobenius norm) than $\boldsymbol{X}^i$ since the smaller $\delta$, the better rank approximation. Suppose $\normF{\boldsymbol{X}^{i+1}-\boldsymbol{X}^*}=\epsilon(1-r)$. Now, let $\delta^{i+1}=arg\underset{\delta}{\min}~ \{\forall \boldsymbol{X} \in \mathcal{B}_{\epsilon(1-r)}: \delta^3\lambda D_H(\boldsymbol{X}) - T_1(\boldsymbol{X}) \succcurlyeq \mathbf{0}\}$. By definition, $\mathcal{B}_{\epsilon(1-r)} \subseteq {B}_{\epsilon}$. Therefore, $\delta^{i+1} < \delta^{i}$ by the definition of the minimizer. The sufficient decrease ratio is given as follows: $\alpha^i(r)=\frac{\delta^{i+1}}{\delta^{i}}$, which depends on $r$.\\
\section{Numerical Experiments}\label{NE}
In this section, we provide numerical experiments conducted on the \textit{MovieLens100K} dataset \cite{harper2016movielens} and \cite{Movielens}. \textit{MovieLens100K} contains $100,000$ ratings (instances) $(1-5)$ from $943$ users on $1682$ movies, where each user has rated at least $20$ movies. Our purpose is to predict the ratings which have not been recorded or completed by users.
We assume this rating matrix is a quantized version of a genuine low-rank matrix and recover it using our algorithm. Then, a final quantization can be applied to predict the missing ratings. We compare the learning accuracy and the computational complexity of our proposed approach to state-of-the-art methods discussed in introduction on \textit{MovieLens100K} \ref{Intro}: 
\textbf{Logarithmic Barrier Gradient Method (LBG)} \cite{bhaskar}, \textbf{SPARFA-Lite} (abbreviated as \textbf{SL}) \cite{lan2014matrix}, \cite{lan2014quantized}, and \textbf{QMC-BIF} \cite{esmaeili2018recovering}  in Tables \ref{T1}, \ref{T2}.
\begin{table}[h]
\caption{Recovery RMSE for Different Methods on MovieLens100K}\label{T1}
\begin{center}
\begin{tabular}{ |c|c|c|c|c| }
\hline
 MR & \textbf{SL} & \textbf{LBG} & \textbf{QMC-BIF} & \textbf{QMC-HANDS}\\ 
 \hline
 $10\%$ & $1.316$ & $1.180$ & $0.943$ & $0.898$\\ 
 \hline
 $20\%$ &  $1.825$ & $1.703$ & $1.375$ & $1.287$\\ 
 \hline
 $30\%$ &  $2.608$ & $2.459$ & $2.017$ & $1.773$\\  
 \hline
 $50\%$ &  $3.712$ & $3.521$ & $3.116$ & $2.699$\\  
 \hline
\end{tabular}
\end{center}
\end{table}
\begin{table}[h]
\caption{Computational Runtimes of Different Methods on MovieLens100K (in Seconds)}\label{T2}
\begin{center}
\begin{tabular}{ |c|c|c|c|c| }
\hline
 MR & \textbf{SL} & \textbf{LBG} & \textbf{QMC-BIF} & \textbf{QMC-HANDS}\\ 
 \hline
 $10\%$ & $635$ & $573$ & $612$ & $426$\\ 
 \hline
 $20\%$ &  $647$ & $582$ & $726$ & $430$\\ 
 \hline
 $30\%$ &  $672$ & $593$ & $792$ & $445$\\  
\hline
 $50\%$ &  $718$ & $601$ & $843$ & $482$\\  
\hline
\end{tabular}
\end{center}
\end{table}
We abbreviate the term "missing rate" in Tables \ref{T1}, \ref{T2} with MR. We have induced different MR percentages on the \textit{MovieLens100K} dataset and averaged the performance of each method over $20$ runs of simulation. $\alpha, \mu, C, \lambda$ are set using cross-validation. \\
It can be seen that owing to the differentiability and smoothness, our proposed method is fast. As it can be found in Table \ref{T2}, the computational runtime of the \textbf{QMC-HANDS} is reduced in some cases by $20\%-25\%$ compared to the state-of-the-art. The computational time in seconds are measured on an $@$Intel Core i7 6700 HQ 16 GB
RAM system using MATLAB \textsuperscript{\tiny\textregistered}. In addition, the learning accuracy of our method is enhanced up to $15\%$ in the best case, and outperforms other mentioned methods in the remaining simulation scenarios as reported in Table \ref{T1}. The superiority of our proposed algorithm compared to other mentioned methods is also observed on \textbf{synthetic} datasets. We aim to include the results on synthesized data in an extended work as the future work of this paper.
It is needless to say that like \cite{esmaeili2018recovering}, no projection is required in our proposed algorithm in contrast to \cite{bhaskar}, and \cite{lan2014matrix}.
\section{Conclusion}\label{conc}
In this paper, a novel approach to Quantized Matrix Completion (QMC) using Huber loss measure is introduced. A novel algorithm, which is not restricted to have initial rank knowledge, is proposed for an unconstrained differentiable optimization problem. We have established rigid and novel theoretical analyses and convergence guarantees for the proposed method. The experimental contribution of our work includes enhanced accuracy in recovery (up to $15\%$), and noticeable computational complexity reduction ($20\%-25\%$) compared to state-of-the-art methods as illustrated in numerical experiments.
\bibliographystyle{IEEEtran} 
\bibliography{refQMC}
\end{document}